\newcommand{\eat}[1]{}
\newcommand{\etal}{{et al.~}}       
\newcommand{\eg}{{e.g.~}}           
\newcommand{\ie}{{i.e.~}}           
\newcommand{\wrt}{{w.r.t.~}}         
\newcommand{\aka}{{a.k.a.~}}        
\newcommand*{\indep}{%
	\mathbin{%
		\mathpalette{\@indep}{}%
	}%
}
\newcommand*{\nindep}{%
	\mathbin{
		\mathpalette{\@indep}{\not}
	}%
}
\newcommand*{\@indep}[2]{%
	\sbox0{$#1\perp\m@th$}
	\sbox2{$#1=$}
	\sbox4{$#1\vcenter{}$}
	\rlap{\copy0}
	\dimen@=\dimexpr\ht2-\ht4-.2pt\relax
	\kern\dimen@
	{#2}%
	\kern\dimen@
	\copy0 
}
\begin{document}
\title{Learning Conditional Instrumental Variable Representation for Causal Effect Estimation}
\titlerunning{DVAE.CIV}
\author{Debo Cheng\inst{1}\textsuperscript{*}(\Letter)\and
	Ziqi Xu\inst{1}\textsuperscript{*} \and
	Jiuyong Li\inst{1} \and
	Lin Liu\inst{1}\and \\
	Thuc Duy Le\inst{1} \and
	Jixue Liu\inst{1}}
\authorrunning{D. Cheng et al.}
%
\institute{\textsuperscript{1}University of South Australia, Adelaide, Australia\\ 			  
	\email{\{Debo.Cheng,Ziqi.Xu\}@mymail.unisa.edu.au,}\\
	\email{\{Jiuyong.Li,Lin.Liu,Thuc.Le,Jixue.Liu\}@unisa.edu.au}}
\maketitle              

\footnote{* These authors contributed equally.}

\begin{abstract}
One of the fundamental challenges in causal inference is to estimate the causal effect of a treatment on its outcome of interest from observational data. However, causal effect estimation often suffers from the impacts of confounding bias caused by unmeasured confounders that affect both the treatment and the outcome. The instrumental variable (IV) approach is a powerful way to eliminate the confounding bias from latent confounders. However, the existing IV-based estimators require a nominated IV, and for a conditional IV (CIV) the corresponding conditioning set too, for causal effect estimation. This limits the application of IV-based estimators. In this paper, by leveraging the advantage of disentangled representation learning, we propose a novel method, named DVAE.CIV, for learning and disentangling the representations of CIV and the representations of its conditioning set for causal effect estimations from data with latent confounders. Extensive experimental results on both synthetic and real-world datasets demonstrate the superiority of the proposed DVAE.CIV method against the existing causal effect estimators.
\keywords{Causal Inference \and Instrumental Variable \and Latent Confounder}
\end{abstract}

\section{Introduction}
\label{Sec:Intro}
It is a fundamental task to query or estimate the causal effect of a treatment  (\aka exposure, intervention or action) on an outcome of interest in causal inference. Causal effect estimation has wide applications across many fields, including but not limited to, economics~\cite{imbens2015causal}, epidemiology~\cite{hernan2006instruments,martinussen2019instrumental}, and computer science~\cite{pearl2009causality}. The gold standard method for causal effect estimation is randomised controlled trials (RCT), but they are often impractical or unethical due to cost restrictions or ethical constraints~\cite{pearl2009causality,imbens2015causal}. Instead of conducting an RCT, estimating causal effects from observational data offers an alternative to evaluate the effect of a treatment on the outcome of interest.

\begin{figure}[t]
 \centering
 \includegraphics[scale=0.28]{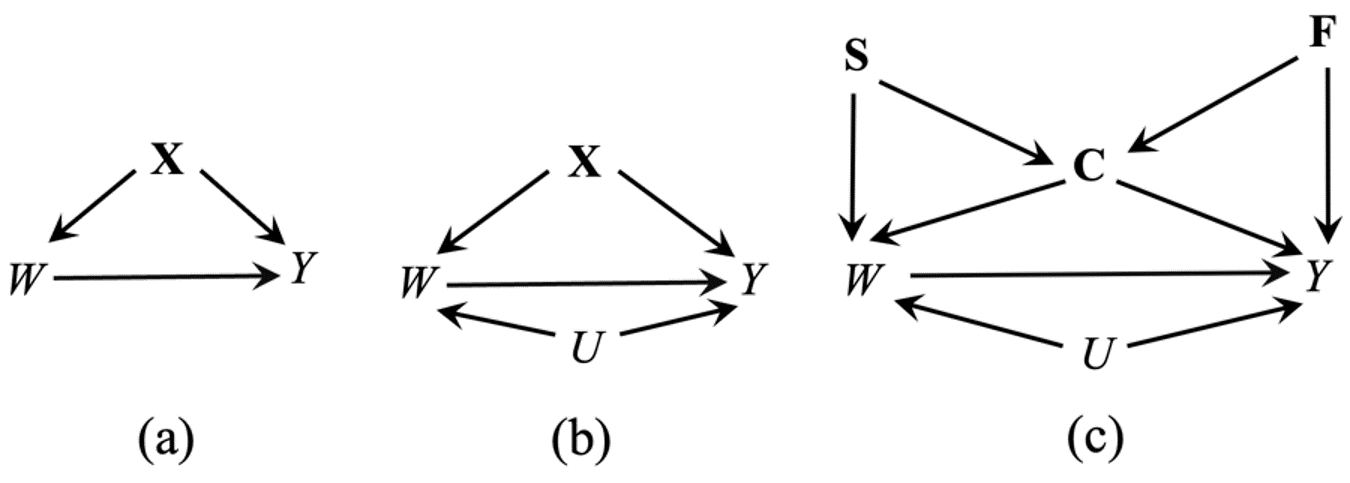}
 \caption{Three causal DAGs are utilised to illustrate the problems of causal effect estimation from observational data. In all three DAGs, $\mathbf{X}$, $U$, $W$ and $Y$ are the set of pretreatment variables, latent confounder, treatment and outcome variables, respectively. (a) indicates the unconfoundedness assumption holding, and (b) shows the causal effect of $W$ on $Y$ is non-identification since there is a latent confounder $U$. (c) illustrates the problem studied in this work, in which the set $\mathbf{X}$ is represented by three sets $\{\mathbf{S, C, F}\}$.} 
\label{fig:dags}
\end{figure}

Confounding bias is a major obstacle in estimating causal effects from observational data. It arises from confounders that affect both the treatment variable $W$ and the outcome variable $Y$. When all confounders are measured (i.e., the unconfoundedness assumption~\cite{rosenbaum1983central,imbens2015causal} is satisfied), adjusting for the set of all measured confounders is sufficient to obtain an unbiased estimation of the causal effect from observational data~\cite{imbens2015causal,abadie2006large}. For example, in the causal graph of Fig.~\ref{fig:dags}(a),  the unconfoundedness is satisfied when given $\mathbf{X}$. Nevertheless, the unconfoundedness assumption is untestable, and there exists a latent (\aka unobserved, unmeasured) confounder affecting both $W$ and $Y$ in many real-world applications, \eg the latent confounder $U$ affects both $W$ and $Y$ in the causal graph in Fig.~\ref{fig:dags}(b). In such a situation, the causal effect of $W$ on $Y$ is non-identification~\cite{pearl2009causality}. Most existing data-driven methods rely on the unconfoundedness assumption and thus it becomes challenging and questionable for them to obtain unbiased causal effects from data with latent confounders.  
 
The instrumental variable (IV) approach is a practical and powerful technique for addressing the challenging problem of causal effect estimation in the presence of latent confounders. The IV approach requires a valid IV for eliminating the confounding bias caused by latent confounders~\cite{angrist1995two,imbens2014instrumental}. Valid IVs are exogenous variables that are associated with $W$ but not directly associated with $Y$~\cite{hernan2006instruments,martens2006instrumental}. A valid IV $S$ needs to satisfy three conditions: (1) $S$ is correlated to $W$; (2) $S$ and $Y$ do not share confounders (\ie \emph{unconfounded instrument}); and (3) the effect of $S$ on $Y$ is entirely through $W$ (\ie \emph{exogenous})~\cite{martens2006instrumental,hernan2006instruments}. However, the last two conditions are too strict and not testable in real-world applications. Therefore, in many existing IV-based methods, an IV is nominated based on prior or domain knowledge. However, in many real-world applications, the nominated IVs based on domain knowledge could violate one of the three conditions, resulting in a biased estimate and potentially leading to incorrect conclusions~\cite{brito2002generalized,hernan2006instruments}. 

It is a challenging problem to discover a valid IV directly from data. Investigators usually collect as many covariates as possible, but few of them are valid IVs that satisfy the three conditions. Instead of discovering a valid IV, Kang~\etal\cite{kang2016instrumental} proposed a data-driven method, referred to as sisVIVE, based on the assumption of some invalid and some valid IVs (\ie more than half of candidate IVs are valid IVs ) to provide a bound of causal effect estimations. Hartford~\etal\cite{hartford2017deep} proposed DeepIV, a deep learning based IV approach for counterfactual predictions, but it requires a nominated IV and the corresponding conditioning set. Kuang~\etal\cite{kuang2020ivy} developed a method to model a summary IV as a latent variable based on the statistical dependencies of the set of candidate IVs. Yuan~\etal\cite{yuan2022auto} proposed a data-driven method to automatically generate a synthetic IV for counterfactual predictions, but the method does not consider the confounding bias between the IV and the outcome, and the condition of \emph{unconfounded instrument} may be violated in many cases. Therefore, it is desirable to develop an algorithm for learning a valid IV that considers the unconfounded instrument for causal effect estimations, especially conditional average causal effect estimations, from data with latent confounders.

To provide a practical solution for conditional average causal effect estimations, in this work, we focus on conditional IV (CIV), which can be considered as an IV with relaxed conditions and a CIV requires a conditioning set to instrumentalise it to function as an IV (details see Definition~\ref{def:conditionalIV}). We propose to leverage disentangled representation learning technique to learn from data the representations of a CIV and its conditioning set.  

Specifically, as shown in Fig.~\ref{fig:dags}(c), we assume that the observed covariates are learned through three representations, $\mathbf{S}$, $\mathbf{C}$ and $\mathbf{F}$. Here, $\mathbf{S}$ affects both treatment $W$ and $\mathbf{C}$, $\mathbf{C}$ represents the confounding factor affecting both $W$ and the outcome $Y$, and $\mathbf{F}$ represents the risk factor affecting both $\mathbf{C}$ and $Y$.  We then establish a theorem that $\mathbf{S}$ is a valid CIV that is instrumentalised by $\{\mathbf{C, F}\}$, meaning that $\{\mathbf{C, F}\}$ is the conditioning set of $\mathbf{S}$.  Supported by this theorem, we design and develop a novel disentangled representation learning algorithm called DVAE.CIV model, which is based on the Variational AutoEncoder (VAE) model~\cite{kingma2014auto}. This model allows us to obtain the representations of the CIV $\mathbf{S}$ and its conditioning set $\{\mathbf{C, F}\}$, enabling us to use $\mathbf{S}$ as a valid IV conditioning on $\{\mathbf{C, F}\}$  for estimating the conditional average causal effects of $W$ on $Y$ from data when there are latent confounders. The main contributions of the paper are summarised as follows.
\begin{itemize}
	\item We address a challenging problem in conditional average causal effect estimations from data with latent confounders by utilising the CIV approach and VAE models. 
	\item We propose a novel disentanglement learning model based on the conditional VAE model to learn and disentangle the representations of covariates into the representations of a CIV $\mathbf{S}$ and its conditioning set $\{\mathbf{C, F}\}$ for conditional average causal effect estimations from data with latent confounders. 
	\item We conduct extensive experiments on synthetic and real-world datasets to show the performance of the DVAE.CIV model, \wrt causal effect estimations from data with latent confounders. 
\end{itemize}

\section{Preliminaries}
\label{sec:pre}
In this paper,  uppercase and lowercase letters are utilised to represent variables and their values, respectively. Bold-faced uppercase and lowercase letters indicate a set of variables and a value assignment of the set, respectively. 

A DAG (direct acyclic graph) is a graph that contains directed edges (\ie $\rightarrow$) without cycles. In a DAG $\mathcal{G}$, the directed edge $X_i \rightarrow X_j$ represents that $X_i$ is a cause of $X_j$, and $X_j$ is an effect of $X_i$. A DAG is a causal DAG when a direct edge $X_i \rightarrow X_j$ represents that $X_i$ is a cause of $X_j$. In this work, we assume a causal DAG $\mathcal{G}=(\mathbf{V, E})$ to represent the underlying system, where $\mathbf{V}=\mathbf{X}\cup\mathbf{U}\cup\{W, Y\}$, and $\mathbf{E}\subseteq \mathbf{V} \times \mathbf{V}$ denotes directed edges. In $\mathbf{V}$, we assume that $\mathbf{X}$ is the set of pretreatment variables, $\mathbf{U}$ is the set of latent confounders, $W$ is a binary treatment variable ($w=1$ and $w=0$ denote the treated sample and control sample, respectively), and $Y(w)$ is an outcome of interest. Following the potential outcome model~\cite{rosenbaum1983central,imbens2015causal}, we have the potential outcomes $Y(w=1)$ and $Y(w=0)$ relative to the treatment $W$. Note that we can only measure one of the two potential outcomes for a given individual $x_i$. Conceptually,  the individual causal effect (ICE) at $x_i$ is defined as $ICE_i = Y_i(w=1) -Y_i(w=0)$.  The average causal effect of $W$ on $Y$ is defined as ACE$(W, Y)= \mathbb{E}[Y_i(w=1) -Y_i(w=0)]$, where $\mathbb{E}$ is the expectation function.

The conditional average causal effect (CACE) of $W$ on $Y$ is referred to as CACE$(W, Y)$, and defined as the form $P(Y| do(w), \mathbf{X})$, where $do(\cdot)$ is do-operation and indicates an intervention on the treatment (\ie set the value of $W$ as per~\cite{pearl2009causality}). Conceptually, $P(Y| do(w), \mathbf{X})$ can be obtained as:
\begin{equation}
	\label{eq003}
	\begin{aligned}
	{\rm CACE}(W, Y) =  \mathbb{E}[Y_i (w=1) - Y_i(w=0)\mid \mathbf{x}_i =x]  
	\end{aligned}
\end{equation}

In this work, we would like to estimate CACE$(W, Y)$ from data that there exists at least a latent confounder $U$ affecting both $W$ and $Y$.  When there is an IV $S$ and the set of conditioning covariates $\mathbf{Z}$ available in data, CACE$(W, Y)$ can be calculated by the following formula as in~\cite{angrist1996identification,imbens2015causal}:
\begin{equation}
	\label{eq004}
	{\rm CACE}(W, Y)= \frac{\mathbb{E}(Y|W = 1, S = 1, \mathbf{Z}) - \mathbb{E}(Y|W = 0, S = 1, \mathbf{Z})}{\mathbb{E}(W|S = 1, \mathbf{Z}) - \mathbb{E}(W|S = 0, \mathbf{Z})}
\end{equation}

The approach of CIV allows a measured covariate to be a valid IV conditioning on a set of measured variables. The formal definition of the CIV in a DAG (Definition 7.4.1 on Page 248~\cite{pearl2009causality}) is introduced as follows.
\begin{definition}[Conditional IV]
	\label{def:conditionalIV}
	Let $\mathcal{G}=(\mathbf{V, E})$ be a DAG with $\mathbf{V}=\mathbf{X}\cup\mathbf{U}\cup\{W, Y\}$, a variable $Q\in\mathbf{X}$ is a conditional IV \wrt $W\rightarrow Y$ if there exists a set of measured variables $\mathbf{Z}\subseteq\mathbf{X}$ such that (i) $Q\nindep_d W\mid\mathbf{Z}$, (ii) $Q\indep_d Y\mid\mathbf{Z}$ in $\mathcal{G}_{\underline{W}}$, and (iii) $\forall Z\in \mathbf{Z}$, $Z$ is not a descendant of $Y$.
\end{definition}

Here, $\indep_d$ and $\nindep_d$ are d-separation and d-connection for reading the conditioning relationships between nodes in a DAG~\cite{pearl2009causality}. The manipulated DAG $\mathcal{G}_{\underline{W}}$ in Definition~\ref{def:conditionalIV} is obtained by deleting the direct edge $W\rightarrow Y$ from the DAG $\mathcal{G}$. Note that Definition~\ref{def:conditionalIV} is defined on a single CIV $Q$ that can be generalised to a set of CIVs $\mathbf{Q}$ easily.

With the pretreatment variables assumption, there is not a descendant of $Y$ in $\mathbf{X}$, \ie the condition (iii) of Definition~\ref{def:conditionalIV} is always held. It means that one needs to check the first two conditions for verifying whether a variable is a CIV or not.  Note that discovering a conditioning set $\mathbf{Z}$ from a given DAG is NP-complete~\cite{van2015efficiently}. Under the pretreatment assumption, the time complexity of discovering a conditioning set is still NP-complete. Instead of discovering a conditioning set from a given causal DAG, in this work, we will utilise disentangled representation learning to learn the representations of CIVs and the representations of the conditioning set directly from data with latent confounders. 

\section{The Proposed DVAE.CIV Model}
\label{sec:theo}
\subsection{The Disentangled Representation Learning Scheme for Causal Effect Estimation}
\label{subsec:thedisentanglingRLS}

In this work, we would like to estimate CACE$(W, Y)$ from observational data with latent confounders. Note that the causal effect of $W$ on $Y$ is non-identifiable when there exists a latent confounder $U\in \mathbf{U}$ affecting both $W$ and $Y$, \ie $W\leftarrow U \rightarrow Y$ in the underlying DAG~\cite{brito2002generalized,pearl2009causality}. It is challenging to recover CACE$(W, Y)$ from data with latent confounders due to the effect of $U$ is not computable. If there is a nominated CIV and its corresponding conditioning set, CACE$(W, Y)$ can be obtained unbiasedly from data by using an IV-based estimator. However, a CIV and its conditioning set are usually unknown in many real-world applications. Furthermore, if an invalid CIV is used, the wrong result or conclusion may be drawn~\cite{martens2006instrumental,cheng2022data}.   

To estimate the conditional average causal effects and average causal effects from data with latent confounders, we propose and design the DVAE.CIV model to learn three representations $\{\mathbf{S, C, F}\}$ as in the scheme of Fig.~\ref{fig:dags}(c). Here $\mathbf{S}$ is the representation of CIVs that only affect $W$ but not $Y$, $\mathbf{F}$ is the representation of the risk factors that affects $Y$ but not $W$, and $\mathbf{C}$ is the confounding representation that affecting both $W$ and $Y$.

Our proposed DVAE.CIV model relies on VAEs: we assume that the measured covariates factorise conditioning on the latent variables, and use an inference model~\cite{kingma2014auto}  which follows a factorisation of the true posterior~\cite{louizos2017causal,hassanpour2019learning}. Based on our disentanglement setting in Fig.~\ref{fig:dags}(c), we have the following theoretical result for causal effect estimation from data with latent confounders. 
 
\begin{theorem} 
	\label{theo:001}
	Let $\mathcal{G}=(\mathbf{X}\cup \mathbf{U}\cup\{W, Y\}, \mathbf{E})$ be a causal DAG, in which $\mathbf{X}$ is a set of pretreatment variables, $\mathbf{U}$ is a set of latent confounders, $W$ and $Y$ are treatment and outcome variables, respectively, and $W\rightarrow Y$ is in $\mathbf{E}$. If we can learn the three representations as per the scheme in Fig.~\ref{fig:dags}(c), then the quantities of  {CACE}$(W, Y)$ can be calculated by using IV-based method.
\end{theorem}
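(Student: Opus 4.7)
My plan is to show that the learned representation $\mathbf{S}$ satisfies the three conditions of Definition~\ref{def:conditionalIV} with $\{\mathbf{C}, \mathbf{F}\}$ as its instrumentalising conditioning set. Once this is established, Equation~(\ref{eq004}) applied with $\mathbf{Z}=\{\mathbf{C},\mathbf{F}\}$ delivers ${\rm CACE}(W,Y)$ from observational data, even in the presence of the latent confounder $U$.

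First I would translate the scheme of Fig.~\ref{fig:dags}(c) into its edge set: $\mathbf{S}\rightarrow W$, $\mathbf{S}\rightarrow \mathbf{C}$, $\mathbf{C}\rightarrow W$, $\mathbf{C}\rightarrow Y$, $\mathbf{F}\rightarrow \mathbf{C}$, $\mathbf{F}\rightarrow Y$, $U\rightarrow W$, $U\rightarrow Y$, and $W\rightarrow Y$. With this explicit structure, condition (iii) of Definition~\ref{def:conditionalIV} is immediate, since $\mathbf{C}$ and $\mathbf{F}$ are pretreatment variables and hence not descendants of $Y$. Condition (i), namely $\mathbf{S}\nindep_d W\mid \{\mathbf{C},\mathbf{F}\}$, follows at once from the direct edge $\mathbf{S}\rightarrow W$, which cannot be blocked by conditioning on any set that excludes $W$.

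The main technical step is verifying condition (ii): $\mathbf{S}\indep_d Y\mid \{\mathbf{C},\mathbf{F}\}$ in the mutilated graph $\mathcal{G}_{\underline{W}}$ obtained by deleting $W\rightarrow Y$. I would enumerate every path between $\mathbf{S}$ and $Y$ in $\mathcal{G}_{\underline{W}}$ and argue each is blocked by $\{\mathbf{C},\mathbf{F}\}$ via d-separation. The cases are: (a) any path passing through $W$ traverses $W$ as a collider, because in $\mathcal{G}_{\underline{W}}$ all remaining neighbours of $W$ (namely $\mathbf{S},\mathbf{C},U$) have arrows into $W$ and $W$ itself is not in the conditioning set, so such paths are blocked; (b) the chain $\mathbf{S}\rightarrow \mathbf{C}\rightarrow Y$ is blocked at the non-collider $\mathbf{C}$; (c) the path $\mathbf{S}\rightarrow \mathbf{C}\leftarrow \mathbf{F}\rightarrow Y$ is opened at the collider $\mathbf{C}$ by the conditioning, yet is then blocked at the non-collider $\mathbf{F}$; and (d) any extension reaching $U$ must first cross the collider $W$, which has already been handled in case~(a).

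I expect case (c) to be the most delicate step, since the very act of conditioning on $\mathbf{C}$ activates a previously inert path, and closing it again requires that $\mathbf{F}$ is simultaneously present in the conditioning set. This is where the disentanglement into three distinct representations is essential: conditioning on $\mathbf{C}$ alone would leave (c) open, while conditioning on $\mathbf{F}$ alone would leave (b) open. Having verified conditions (i)--(iii), I can then invoke the standard CIV identification result, so that Equation~(\ref{eq004}) applied with the learned representation $\mathbf{S}$ and conditioning set $\mathbf{Z}=\{\mathbf{C},\mathbf{F}\}$ identifies ${\rm CACE}(W,Y)$, completing the proof.
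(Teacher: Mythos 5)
Your proposal is correct and follows essentially the same route as the paper's own proof: verify conditions (i)--(iii) of Definition~\ref{def:conditionalIV} for $\mathbf{S}$ with conditioning set $\{\mathbf{C},\mathbf{F}\}$ by enumerating and blocking the paths between $\mathbf{S}$ and $Y$ in $\mathcal{G}_{\underline{W}}$ (collider at $W$, chain through $\mathbf{C}$, and the collider-then-fork path closed by $\mathbf{F}$), then hand $\mathbf{S}$ and $\{\mathbf{C},\mathbf{F}\}$ to an IV-based estimator via Eq.~(\ref{eq004}). Your case analysis is, if anything, slightly more systematic than the paper's, but the underlying argument is identical.
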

\begin{proof}
	The directed edge $W\rightarrow Y$ in $\mathcal{G}$ is to ensure that  $W$ has a causal effect on $Y$.  In the causal DAG in Fig.~\ref{fig:dags}(c), we first show that the set $\mathbf{C}\cup\mathbf{F}$ instrumentalises $\mathbf{S}$ to be a valid CIV. $\mathbf{S}$ is a common cause of $W$ and $\mathbf{C}$, so $S\nindep_d W$, \ie the first condition of Definition~\ref{def:conditionalIV} holds.  In the causal DAG $\mathcal{G}$ in Fig.~\ref{fig:dags}(c), $\mathbf{C}$ is a collider and is a common cause of $W$ and $Y$. That is, conditioning on $\mathbf{C}$, the path $W\leftarrow \mathbf{S}\rightarrow \mathbf{C}\leftarrow \mathbf{F}\rightarrow Y$ is open, but $\mathbf{F}$ is sufficient to block this path. For the path $\mathbf{S}\rightarrow \mathbf{C}\rightarrow Y$, $\mathbf{C}$ blocks it. Furthermore,  in the manipulated DAG $\mathcal{G}_{\underline{W}}$, $W$ is a collider such that the empty set blocks the three paths between $\mathbf{S}$ and $Y$, \ie $\mathbf{S}\rightarrow W\leftarrow U\rightarrow Y$, $\mathbf{S}\rightarrow W\leftarrow \mathbf{C}\leftarrow\mathbf{F}\rightarrow Y$ and $\mathbf{S}\rightarrow W\leftarrow\mathbf{C}\rightarrow Y$. Hence, the set $\mathbf{C}\cup\mathbf{F}$ blocks all paths between $\mathbf{S}$ and $Y$ in $\mathcal{G}_{\underline{W}}$, \ie the second condition of Definition~\ref{def:conditionalIV} holds. Finally,  $\mathbf{C}\cup\mathbf{F}$ does not contains a descendant of $Y$ due to the pretreatment variables assumption. Thus, the set $\mathbf{C}\cup\mathbf{F}$ instrumentalises $\mathbf{S}$. As in Eq.(\ref{eq004}), the IV-based estimators, such as DeepIV~\cite{hartford2017deep}, can be applied to remove the effect of $\mathbf{U}$ by inputting the CIV representation $\mathbf{S}$ and the representations of its conditioning set $\mathbf{C}\cup\mathbf{F}$. 
	Therefore, the quantities of  $\rm{CACE}(W, Y)$ can be obtained by using the CIV $\mathbf{S}$ and its conditioning set $\mathbf{C}\cup\mathbf{F}$ in an IV-based estimator. 
\end{proof}

Theorem~\ref{theo:001} ensures that a family of data-driven methods can be applied for causal effect estimation from data with latent confounders.

\subsection{Learning the Three Representations} 
\label{subsec:learningthree}

Based on Theorem~\ref{theo:001}, we have known that the set $\{\mathbf{C}, \mathbf{F}\}$ instrumentalises $\mathbf{S}$.  In this section, we present our proposed DVAE.CIV model for obtaining the three latent representations from data by using the VAE technique~\cite{kingma2014auto}, and the architecture of DVAE.CIV is presented in Fig.~\ref{fig:DVAE.CIV_arch}. As shown in Fig.~\ref{fig:DVAE.CIV_arch}, the DVAE.CIV model is to learn and disentangle the latent representation $\mathbf{\Phi}$ of $\mathbf{X}$ into two disjoint sets $\{\mathbf{S, F}\}$ by using disentangled variational autoencoder~\cite{hassanpour2019learning,zhang2021treatment}, and generate the representation $\mathbf{C}$ conditioning on $\mathbf{X}$ by jointing the Conditional Variational AutoEncoder (CVAE) network~\cite{sohn2015learning}. 

\begin{figure}[t]
	\centering
	\begin{subfigure}[b]{0.38\textwidth}
		\centering
		\includegraphics[scale=0.25]{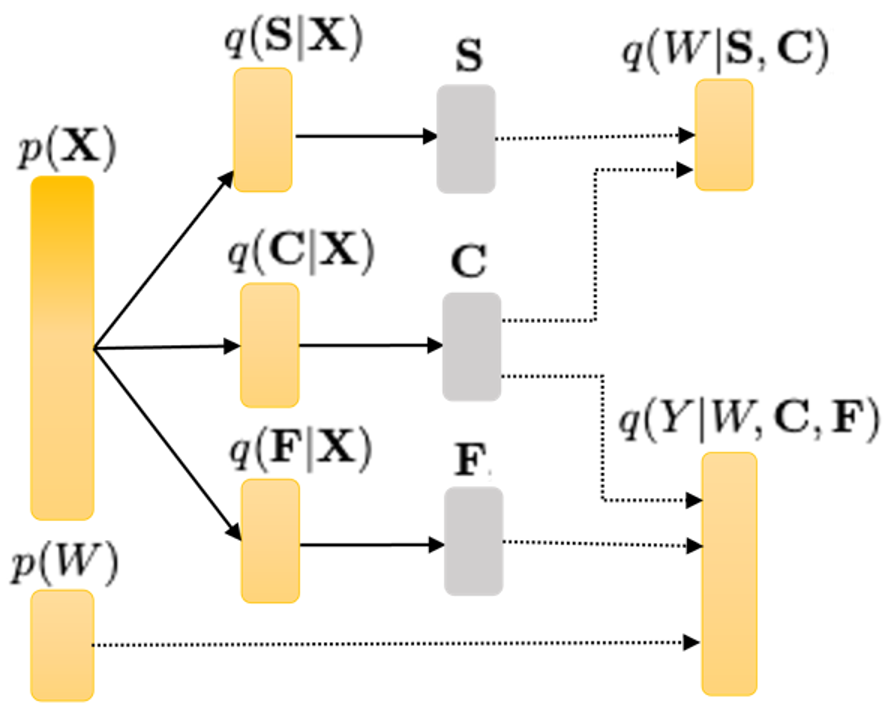}
		\caption{}
		\label{pic:intro2}
	\end{subfigure}
	\begin{subfigure}[b]{0.59\textwidth}
		\centering
		\includegraphics[scale=0.25]{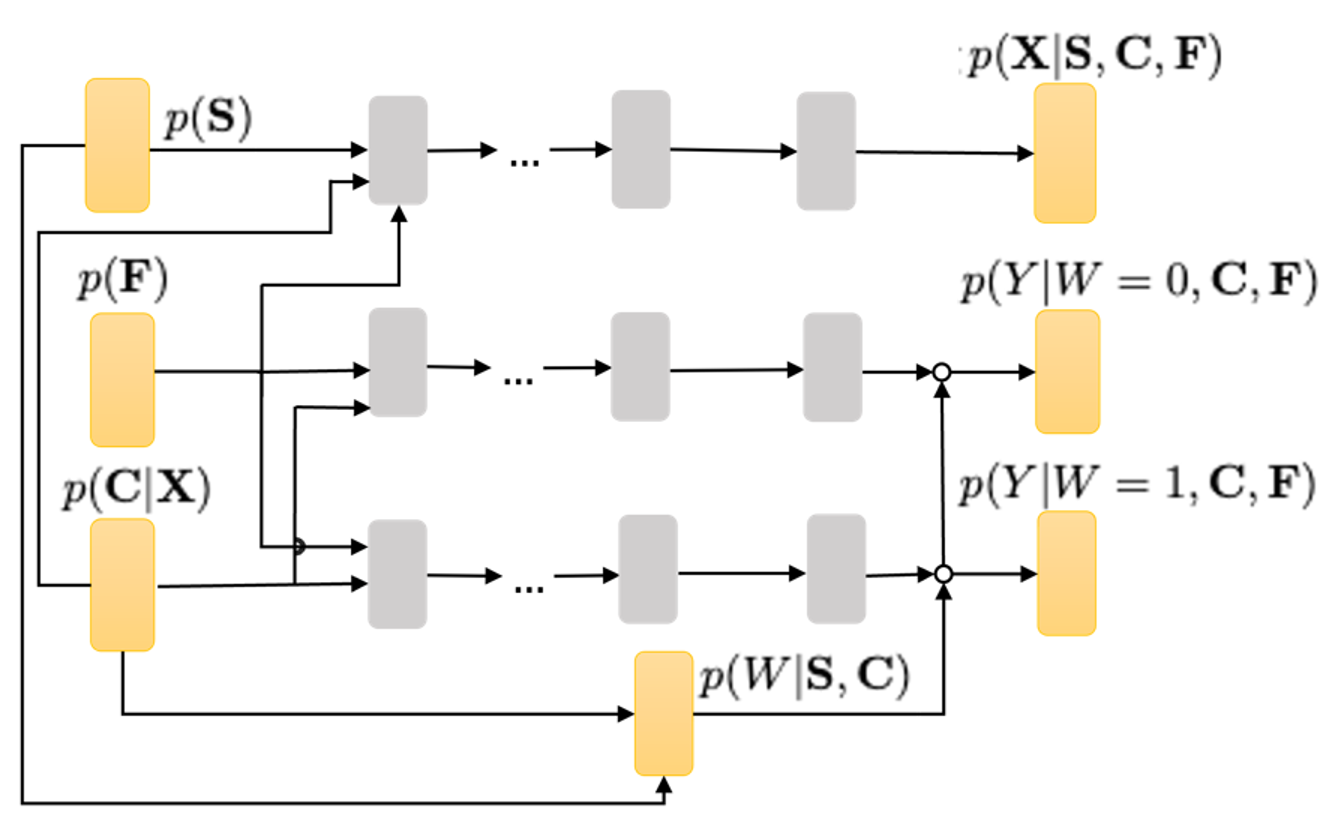}
		\caption{}
		\label{pic:intro3}
	\end{subfigure}
	\caption{The architecture of DVAE.CIV model. A yellow box indicates the drawing of samples from the respective distributions, a grey box indicates the parameterised deterministic neural network transitions, and a circle represents switching paths based on the value of $W$.}
	\label{fig:DVAE.CIV_arch}
\end{figure}

The DVAE.CIV model is designed to learn three representations shown in Fig.\ref{fig:dags}(c) by utilising the inference model and generative model to approximate the posterior distribution $p(\mathbf{X}|\mathbf{S},\mathbf{C},\mathbf{F})$. The inference model comprises three independent encoders $q(\mathbf{S}|\mathbf{X})$, $q(\mathbf{C}|\mathbf{X})$, and $q(\mathbf{F}|\mathbf{X})$, which are treated as variational posteriors over the three latent representations. The generative model utilises the three latent representations with a decoder model $p(\mathbf{X}|\mathbf{S},\mathbf{C}, \mathbf{F})$ to reconstruct the measured distribution $\mathbf{X}$.

Following the standard VAE model~\cite{kingma2014auto}, the prior distributions $p(\mathbf{S})$ and $p(\mathbf{F})$ are drawn from the Gaussian distributions as:
\begin{equation}
	\begin{aligned}
		p(\mathbf{S}) = \prod_{i=1}^{D_{\mathbf{S}}} \mathcal{N}(S_{i} | 0, 1);~
		p(\mathbf{F}) = \prod_{i=1}^{D_{\mathbf{F}}} \mathcal{N}(F_{i} | 0, 1).
	\end{aligned}	
\end{equation} 
where $D_{\mathbf{S}}$ and $D_{\mathbf{F}}$ are the dimensions of $\mathbf{S}$ and $\mathbf{F}$, respectively. In the inference model, the variational approximations of the posteriors are described as:
\begin{equation}
	\begin{aligned}
			&q(\mathbf{S}|\mathbf{X}) = \prod_{i=1}^{D_{\mathbf{S}}} \mathcal{N}(\mu = \hat{\mu}_{\mathbf{S}_{i}}, \sigma^2 = \hat{\sigma}^2_{\mathbf{S}_{i}});~
			q(\mathbf{C}|\mathbf{X}) = \prod_{i=1}^{D_{\mathbf{C}}} \mathcal{N}(\mu = \hat{\mu}_{\mathbf{C}_i}, \sigma^2 = \hat{\sigma}^2_{\mathbf{C}_i}); \\
			&q(\mathbf{F}|\mathbf{X}) = \prod_{i=1}^{D_{\mathbf{F}}} \mathcal{N}(\mu = \hat{\mu}_{\mathbf{F}_i}, \sigma^2 = \hat{\sigma}^2_{\mathbf{F}_i}),
	\end{aligned}	
\end{equation}
where $D_{\mathbf{C}}$ is the dimension of $\mathbf{C}$, and $\hat{\mu}_{\mathbf{S}}, \hat{\mu}_{\mathbf{C}}, \hat{\mu}_{\mathbf{F}}$ and $\hat{\sigma}^2_{\mathbf{S}}, \hat{\sigma}^2_{\mathbf{C}}, \hat{\sigma}^2_{\mathbf{F}}$ are the parameters of means and variances in the Gaussian distributions parameterised by neural networks.
	
In the generative model, we utilise the Monte Carlo (MC) sampling strategy to sample the distribution $\mathbf{C}$ based on the Conditional Variational AutoEncoder network (CVAE)~\cite{sohn2015learning} such that the latent representation of $\mathbf{C}$ is generated from the distribution $\mathbf{X}$:
\begin{equation}
	\begin{aligned}
		p(\mathbf{C}) \backsim p(\mathbf{C} |\mathbf{X}).
	\end{aligned}	
\end{equation}

Furthermore, the generative models for $W$ and $\mathbf{X}$ with the three latent representations are formalised as:
\begin{equation}
	\begin{aligned}
		p(W|\mathbf{S},\mathbf{C}) = Bern(\sigma(\psi_1(\mathbf{S}, \mathbf{C})));~
		p(\mathbf{X}|\mathbf{S}, \mathbf{F}) = \prod_{i=1}^{D_{\mathbf{X}}} p(X_i|\mathbf{S}, \mathbf{C}),
	\end{aligned}	
\end{equation}where $\psi_1(\cdot)$ is a function parameterised by neural networks, $\sigma(\cdot)$ is the logistic function and $Bern(\cdot)$ is the function of Bernoulli distribution.
	
In our generative model, the latent representation for the outcome $Y$ is based on the data type of $Y$. For the outcome $Y$ with continuous values, we use a Gaussian distribution with its mean and variance parameterised by a pair of independent neural networks, \ie $p(Y | w = 0, \mathbf{C}, \mathbf{F})$ and $p(Y | w = 1, \mathbf{C}, \mathbf{F})$. Thus,  the continuous $Y$ is modelled by:
\begin{equation}
	\begin{aligned}
		&p(Y | W, \mathbf{C}, \mathbf{F}) = \mathcal{N}(\mu = \hat{\mu}_{Y}, \sigma^2 = \hat{\sigma}^2_{Y}),\\
		&\hat{\mu}_{Y} = W \cdot \psi_2(\mathbf{C}, \mathbf{F}) + (1-W) \cdot \psi_3(\mathbf{C},\mathbf{F});\\
		&\hat{\sigma}^2_{Y} = W \cdot \psi_4(\mathbf{C}, \mathbf{F}) + (1-W) \cdot \psi_5(\mathbf{C}, \mathbf{F}),
	\end{aligned}	
\end{equation}
where $\psi_2(\cdot), \psi_3(\cdot), \psi_4(\cdot)$ and $\psi_5(\cdot)$ are neural networks parameterised by their own parameters. 

For the outcome $Y$ with binary values, a Bernoulli distribution function based on neural networks is employed to model it and described as:
\begin{equation}
	\begin{aligned}
		&p(Y|W,\mathbf{C}, \mathbf{F}) = Bern(\sigma(\psi_6(W, \mathbf{C}, \mathbf{F}))),
	\end{aligned}	
\end{equation}
where $\psi_6(\cdot)$ is the same with the function $\psi_1$. These parameters of neural networks can be approximated by maximising the Evidence lower bound (ELBO) $\mathcal{L}_{ELBO}$:
\begin{equation}
	\begin{aligned}
		\mathcal{L}_{ELBO} (\mathbf{X}, W, Y)=~ &\mathbb{E}_{q}[\log p(\mathbf{X}|\mathbf{S}, \mathbf{C}, \mathbf{F})] - D_{KL}[q(\mathbf{S}|\mathbf{X})||p(\mathbf{S})] \\&
		- D_{KL}[q(\mathbf{C}|\mathbf{X})||p(\mathbf{C}|\mathbf{X})] - D_{KL}[q(\mathbf{F}|\mathbf{X})||p(\mathbf{F})],
	\end{aligned}	
\end{equation}
where the decoder $p(\mathbf{C}|\mathbf{X})$ is  to ensure that the latent representation $\mathbf{C}$ captures as much information of $\mathbf{X}$ as possible. 

To ensure that the treatment $W$ can be recovered from the latent representations $\mathbf{S}$ and $\mathbf{C}$, and the outcome $Y$ can be recovered from  the latent representations $\mathbf{C}$ and $\mathbf{F}$,  two auxiliary predictors are added and the objective function of DVAE.CIV can be formalised as:
\begin{equation}
	\begin{aligned}
		\mathcal{L}_{DVAE.CIV} =~ &-\mathcal{L}_{ELBO} (\mathbf{X}, W, Y) + \alpha \mathbb{E}_{q}[\log q(W|\mathbf{S}, \mathbf{C})] \\ &
		+ \beta \mathbb{E}_{q}[\log q(Y|W, \mathbf{C}, \mathbf{F})],
	\end{aligned}	
\end{equation}where $\alpha$ and $\beta$ are the weights for the auxiliary predictors. 

After training the DVAE.CIV model, we get the CIV representation $\mathbf{S}$ and the conditioning set  representations $\{\mathbf{C, F}\}$ based on Theorem~\ref{theo:001}. For estimating conditional causal effects, we employ an IV-based prediction, DeepIV~\cite{hartford2017deep}, to implement this part, \ie we feed $\mathbf{S}$ and $\{\mathbf{C, F}\}$ into the DeepIV method for conditional causal effect estimation.

\section{Experiments}
\label{sec:Exp}
In this section, we evaluate the performance of the proposed DVAE.IV model by applying it to a set of synthetic datasets and three real-world datasets for  CACE$(W, Y)$ and average causal effect (ACE) estimation. The three real-world datasets include SchoolingReturns~\cite{card1993using}, Cattaneo~\cite{cattaneo2010efficient} and RHC~\cite{connors1996effectiveness} that are usually utilised in evaluating the methods of causal effect estimation from observational data. Details of the implementation of DVAE.CIV and the appendix are provided in the GitHub\textsuperscript{1}~\footnote{\textsuperscript{1}~\url{https://github.com/IRON13/DVAE.CIV}}. 

\subsection{Experimental Setup}
We compare the DVAE.CIV against the famous estimators in conditional causal effect estimation that are widely utilised in causal inference from observational data. Note that the ACE can be obtained by averaging the CACE$(W, Y)$ of all individuals. These compared causal effect estimators are introduced in the following. 

\paragraph{Compared causal effect estimators.} We compare our proposed DVAE.CIV with two Variational AutoEncoder based  (VAE-based) causal effect estimators, three tree-based causal effect estimators, two machine learning based (ML-based) causal effect estimators, and three IV-based causal effect estimators.  The two VAE-based causal effect estimators are Causal Effect Variational AutoEncoder (CEVAE)~\cite{louizos2017causal} and Treatment Effect estimation by Disentangled Variational AutoEncoder (TEDVAE)~\cite{zhang2021treatment}). The three tree-based causal effect estimators are the standard Bayesian Additive Regression Trees (BART)~\cite{hill2011bayesian}, causal random forest (CF)~\cite{wager2018estimation} and causal random forest for IV regression (CFIVR)~\cite{athey2019generalized}. Note that CFIVR also belongs to IV-based estimators. The two ML-based causal effect estimators are double machine learning (DML)~\cite{chernozhukov2018double} and doubly robust learning (DRL)~\cite{foster2019orthogonal}. The three IV-based causal effect estimators are DeepIV~\cite{hartford2017deep}, orthogonal instrumental variable (OrthIV)~\cite{syrgkanis2019machine} and double machine learning based IV (DMLIV)~\cite{chernozhukov2018double}.

\paragraph{Remarks.} The five estimators TEDVAE, BART, CF, DML and DRL rely on the assumption of unconfoundedness~\cite{imbens2015causal} (\ie no latent confounders in data), so the five estimators cannot deal with the case with the data with latent confounders. CEVAE can deal with latent confounders, but it requires that all measured variables are proxy variables of the latent confounders, while our DVAE.CIV model does not have the restriction. The IV-based estimators CFIVR, DeepIV, OrthIV and DMLIV require a known IV that is nominated based on domain knowledge, but the nominated IV usually is not a valid IV and thus may result in a wrong conclusion as argued in Introduction. 

\paragraph{Implementation details.} We use \textit{Python} and the libraries including \textit{pytorch}~\cite{paszke2019pytorch}, \textit{pyro}~\cite{bingham2019pyro} and \textit{econml} to implement DVAE.CIV. In our experiments, the dimension of latent representations is set as  $\left |S \right|=1$, $\left |C \right|=5$ and $\left |F \right|=5$, respectively. The implementation of CEVAE is based on the \textit{Python} library \textit{pyro}~\cite{bingham2019pyro} and the code of TEDVAE is from the authors' GitHub\textsuperscript{2}\footnote{\textsuperscript{2}~\url{https://github.com/WeijiaZhang24/TEDVAE}}. For BART, we use the implementation in the \textit{R} package \textit{bartCause}~\cite{hill2011bayesian}. For CF and CFIVR, we use the implementations in the \textit{R} functions \emph{causal}$\_$\emph{forest} and \emph{instrumental}$\_$\emph{forest} in the \textit{R} package \emph{grf}~\cite{athey2019generalized}, respectively. The implementations of DML, DRL, DeepIV, OrthIV and DMLIV are from the \textit{Python} package \textit{encoml}.

\paragraph{Evaluation metrics.} For performance evaluation, two commonly used metrics are employed in our experiments.  For the synthetic datasets, we use absolute error of average causal effect~\cite{hill2011bayesian}, \ie $\varepsilon_{ACE} = |ACE-\hat{ACE}|$ where $ACE$ is the true causal effect and $\hat{ACE}$ is the estimated causal effect, and Precision of the Estimation of Heterogeneous Effect (PEHE, it is used to evaluate the CACE estimations.)~\cite{hill2011bayesian,louizos2017causal} $\sqrt{\varepsilon_{PEHE}} = \sqrt{\mathbf{E}(((y_1 - y_0)-(\hat{y}_1 - \hat{y}_0))^{2})}$ where $y_1, y_0$ are the true outcomes and $\hat{y}_1, \hat{y}_0$ are the predicted outcomes,  to assess the performance of all methods in terms of the causal effect estimation. Lower values of both metrics indicate better performance. For multiple replications, we present the mean with standard deviation. For the three real-world datasets,  we use the reference causal effect in the literature as the baseline to evaluate the performance of all estimators since there is no ground truth causal effect available.  
        
\subsection{Simulation Study}
\label{subsec:Simu}
It is challenging to evaluate a causal effect estimation method with real-world data since there is no ground truth in the real-world data. In this section, we design simulation studies to evaluate the performance of our proposed DVAE.CIV method in the case that there exists a latent confounder $U$ affecting both $W$ and $Y$, and there exists a CIV and its conditioning set in the synthetic datasets. 

We use a causal DAG $\mathcal{G}$ provided in the appendix to generate synthetic datasets with a range of sample sizes: 2k, 6k, 10k, and 20k. In the causal DAG $\mathcal{G}$, $\mathbf{X}=\{S, X_1, X_2, X_3, X_4, X_5\}$ is the set of measured covariates and $\mathbf{U}=\{U, U_1, U_2, U_3, U_4\}$ is the set of latent confounders in which $U$ affects both $W$ and $Y$. Note that $S$ is a CIV conditioning on the set $\{X_1, X_2\}$ for all synthetic datasets. Moreover, the data generation process allows the synthetic datasets to have the true individual causal effect. We provide the details of the synthetic data generating process in the appendix. In our experiments, the IV-based estimators OrthIV, DMLIV,  DeepIV and CFIVR utilise the true CIV $S$ and the conditioning set  $\{X_1, X_2\}$ as input for causal effect estimation. 

\begin{table}[ht]
	\centering
	\setlength{\tabcolsep}{1mm}{}
	\caption{The out-of-sample absolute error $\varepsilon_{ACE}$ (mean$\pm$std) over 30 synthetic datasets. The best results are highlighted in boldface and the runner-up results are \underline{underlined}. DVAE.CIV is the runner-up on all synthetic datasets, and it relies on the least domain knowledge among all estimators compared since it learns and disentangles the representations of CIV and its conditioning set from data directly.} 
	\label{tab:syn_ACEerror} 
	\begin{tabular}{|cc|c|c|c|c|}
	\hline
	\multicolumn{2}{|c|}{Samples}                              & 2k                                  & 6k                                        & 10k                      & 20k                      \\ \hline
	\multicolumn{2}{|c|}{Estimators}                              & $\varepsilon_{ACE}$      & $\varepsilon_{ACE}$      & $\varepsilon_{ACE}$      & $\varepsilon_{ACE}$      \\ \hline
	\multicolumn{1}{|c|}{\multirow{2}{*}{ML-based}}   & DML    & 5.507$\pm$0.387                & 5.624$\pm$0.182              & 5.619$\pm$0.122          & 5.633$\pm$0.096          \\ \cline{2-6} 
	\multicolumn{1}{|c|}{}                            & DRL    & 5.746$\pm$0.404                 & 5.833$\pm$0.186                  & 5.825$\pm$0.156          & 5.860$\pm$0.106          \\ \hline
	\multicolumn{1}{|c|}{\multirow{3}{*}{tree-based}} & BART   & 3.586$\pm$0.179          & 3.596$\pm$0.090                  & 3.613$\pm$0.065 & 3.622$\pm$0.060\\ \cline{2-6} 
	\multicolumn{1}{|c|}{}                            & CF     & 3.226$\pm$0.342& 3.246$\pm$0.141& 3.274$\pm$0.127& 3.312$\pm$0.074  \\ \hline       
	\multicolumn{1}{|c|}{\multirow{2}{*}{VAE-based}}  & CEVAE  & 5.595 $\pm$0.455  & 5.652$\pm$0.183       & 5.631$\pm$0.179          & 5.726$\pm$0.123          \\ \cline{2-6} 
	\multicolumn{1}{|c|}{}                            & TEDVAE & 5.615$\pm$0.455 & 5.655$\pm$0.181    & 5.634$\pm$0.172 & 5.696$\pm$0.112 \\ \hline
	\multicolumn{1}{|c|}{\multirow{4}{*}{IV-based}}   & OrthIV & 2.212$\pm$1.260        & 1.952$\pm$0.585                & 1.792$\pm$0.607          & 1.974$\pm$0.419          \\ \cline{2-6} 
	\multicolumn{1}{|c|}{}                            & DMLIV  & 2.170$\pm$1.189             & 1.888$\pm$0.572            & 1.790$\pm$0.626          & 1.971$\pm$0.432      \\\cline{2-6} 
	\multicolumn{1}{|c|}{}           & DeepIV & \textbf{0.352$\pm$0.180}  & 0.632$\pm$0.245                  & 0.726$\pm$0.315          & 0.757$\pm$0.354          \\ \cline{2-6} 
	 \multicolumn{1}{|c|}{}                          & CFIVR  & 1.228$\pm$0.949& \textbf{0.504$\pm$0.369} & \textbf{0.543$\pm$0.474} & \textbf{0.415$\pm$0.307} \\ \hline
	\multicolumn{2}{|c|}{DVAE.CIV}   & \underline{0.577$\pm$0.117}     & \underline{0.577$\pm$0.064}      & \underline{0.561$\pm$0.075}        & \underline{0.512$\pm$0.091}        \\ \hline
	\end{tabular}
\end{table}

\begin{table}
	\centering
	\setlength{\tabcolsep}{1mm}{}
	\caption{The out-of-sample  $\sqrt{\varepsilon_{PEHE}}$ (mean$\pm$std) over 30 synthetic datasets. The lowest $\sqrt{\varepsilon_{PEHE}}$ are highlighted in boldface and the runner-up results are \underline{underlined}.  DVAE.CIV is in the runner-up results on the first two groups of synthetic datasets and achieves the third smallest $\sqrt{\varepsilon_{PEHE}}$ on the last four groups of synthetic datasets. It's worth mentioning that DVAE.CIV obtains the lowest standard deviation on all synthetic datasets. } 
	\label{tab:syn_PEHE}
	\begin{tabular}{|cc|c|c|c|c|}
	\hline
	\multicolumn{2}{|c|}{Samples}                              & 2k                             & 6k                          & 10k                      & 20k                      \\ \hline
	\multicolumn{2}{|c|}{Estimators}                             & $\sqrt{\varepsilon_{PEHE}}$     & $\sqrt{\varepsilon_{PEHE}}$    & $\sqrt{\varepsilon_{PEHE}}$      & $\sqrt{\varepsilon_{PEHE}}$    \\ \hline
	\multicolumn{1}{|c|}{\multirow{2}{*}{ML-based}}   & DML    & 5.484$\pm$0.382          & 5.584$\pm$0.167& 5.580$\pm$0.128 & 5.609$\pm$0.105\\ \cline{2-6} 
	\multicolumn{1}{|c|}{}                            & DRL    & 5.701$\pm$0.408 & 5.773$\pm$0.179 & 5.767$\pm$0.156          & 5.815$\pm$0.112\\ \hline
	\multicolumn{1}{|c|}{\multirow{3}{*}{tree-based}} & BART   & 4.791$\pm$0.205 & 4.790$\pm$0.083   & 4.789$\pm$0.072 & 4.790$\pm$0.060\\ \cline{2-6} 
	\multicolumn{1}{|c|}{}                            & CF     & 3.483$\pm$0.319 & 3.500$\pm$0.134 & 3.523$\pm$0.120& 3.554$\pm$0.070\\ \hline       
	\multicolumn{1}{|c|}{\multirow{2}{*}{VAE-based}}  & CEVAE  & 6.093$\pm$0.396 & 6.138$\pm$0.175 & 6.107$\pm$0.160          & 6.192$\pm$0.112          \\ \cline{2-6} 
	\multicolumn{1}{|c|}{}                            & TEDVAE & 6.111$\pm$0.392  &  6.138$\pm$0.177  & 6.110$\pm$0.158 & 6.167$\pm$0.103 \\ \hline
	\multicolumn{1}{|c|}{\multirow{4}{*}{IV-based}}   & OrthIV & 3.070$\pm$0.718     & 2.798$\pm$0.299  & 2.734$\pm$0.256 & 2.795$\pm$0.218 \\ \cline{2-6} 
	\multicolumn{1}{|c|}{}                            & DMLIV  & 3.027$\pm$0.682 & 2.767$\pm$0.278  & 2.736$\pm. $0.268 &2.794$\pm$0.221      \\ \cline{2-6} 
	 \multicolumn{1}{|c|}{}                          & DeepIV & \textbf{2.396$\pm$0.054} & \underline{2.412$\pm$0.042}  &\textbf{2.418$\pm$0.060} & \underline{2.425$\pm$0.065} \\ \cline{2-6} 
	 \multicolumn{1}{|c|}{}                          & CFIVR  & 3.016$\pm$0.658& \textbf{2.421$\pm$0.235}  & \underline{2.423$\pm$0.351} & \textbf{2.203$\pm$0.145} \\ \hline
	\multicolumn{2}{|c|}{DVAE.CIV}   & \underline{2.448$\pm$0.044}    & {2.460$\pm$0.037}    & {2.452$\pm$0.024}        & {2.442$\pm$0.025}        \\ \hline
	\end{tabular}
\end{table}

To provide a reliable assessment, we repeatedly generate 30 synthetic datasets for each sample size setting and utilize the aforementioned metrics to evaluate the performance of the DVAE.CIV against the compared estimators with respect to the task of ACE estimation and CACE estimation from data with latent confounders. For each dataset, we randomly take 70\% of samples for training and 30\% for testing. The results of all estimators with respect to the ACE estimations and CACE estimations measured by the metrics $\varepsilon_{ACE}$ and $\sqrt{\varepsilon_{PEHE}}$ in the out-of-sample set are provided in Tables~\ref{tab:syn_ACEerror} and~\ref{tab:syn_PEHE}, respectively. The out-of-sample set is on testing samples, and the within-sample set is on training samples. The results of the within-sample set are provided in the appendix.

\paragraph{Results.} By analysing the experiment results in Table~\ref{tab:syn_ACEerror}, we have the following observations: (1) the ML-based and VAE-based estimators, DML, DRL, CEVAE and TEDVAE have the largest  $\varepsilon_{ACE}$ because the confounding bias caused by confounders and the latent confounder $U$ is not adjusted at all. (2) the tree-based estimators, BART and CF have the second largest $\varepsilon_{ACE}$ as they fail to deal with the confounding bias caused by the latent confounder $U$. (3) the IV-based estimators including DVAE.CIV significantly outperform the other estimators including DML, DRL, BART, CF, CEVAE and TEDVAE.  (4) DVAE.CIV is the second best performer on all synthetic datasets and its performance is comparable with CFIVR and DeepIV. (5) as the sample size increases, the standard deviation of most estimators including DVAE.CIV decreases significantly. It's worth mentioning that DVAE.CIV requires the least domain knowledge among all estimators since it only relies on the assumption that there exists a CIV and the conditioning set (maybe an empty set). This is very important in practice, as in many real-world applications, there is rarely sufficient prior knowledge for nominating a valid IV.

\begin{table}[ht]
	\centering
	\setlength{\tabcolsep}{1.5mm}{}
	\caption{Estimated ACEs by all methods on the three real-world datasets. We highlight the estimated causal effects within the empirical interval on SchoolingReturns and Cattaneo. We use `-' to indicate that an IV-based estimator does not work on Cattaneo and RHC since there is not a nominated IV. Note that all estimators on RHC obtain a consistent result.} 
	\label{tab:results}
	\begin{tabular}{|cc|c|c|c|}
	\hline
	\multicolumn{2}{|c|}{Samples}                              &  SchoolingReturns                                           & Cattaneo       &RHC               \\ \hline
	\multicolumn{1}{|c|}{\multirow{2}{*}{ML-based}}   & DML       &  -0.0227       & -150.21        &   \textbf{0.0244}    \\ \cline{2-5} 
	\multicolumn{1}{|c|}{}                            & DRL        &   -0.0154     & -164.32       & \textbf{0.0447}    \\ \hline
	\multicolumn{1}{|c|}{\multirow{3}{*}{tree-based}} & BART   & -0.0384        & -172.53       &  \textbf{0.0381}      \\ \cline{2-5} 
	\multicolumn{1}{|c|}{}                            & CF &   \textbf{0.1400}      &   \textbf{-232.33}    & \textbf{0.0278}        \\ \hline 
	\multicolumn{1}{|c|}{\multirow{2}{*}{VAE-based}}   & CEVAE  & 0.02617   &  \textbf{-221.23}   & \textbf{0.0322}       \\ \cline{2-5} 
	\multicolumn{1}{|c|}{}                            & TEDVAE & 0.0029     &   \textbf{-228.65}     &  \textbf {0.0293}     \\ \hline 
	\multicolumn{1}{|c|}{\multirow{4}{*}{IV-based}}   & OrthIV & 1.3180    &   -     & -         \\ \cline{2-5} 
	\multicolumn{1}{|c|}{}                            & DMLIV &  1.2806   &  -    &  -        \\ \cline{2-5} 
	\multicolumn{1}{|c|}{}           & DeepIV&  0.0328    & -  &  -         \\ \cline{2-5} 
	\multicolumn{1}{|c|}{}                          & CFIVR & 1.1510     &  -       &  -        \\ \hline 
	\multicolumn{2}{|c|}{DVAE.CIV}   & \textbf{0.1855}    &   \textbf{-224.79}     & \textbf{0.0414}        \\ \hline 
	\end{tabular}
\end{table}

From the results in Table \ref{tab:syn_PEHE}, we can conclude that (1) the ML-based, tree-based, and VAE-based estimators have the worst performance with respect to conditional causal effect estimations. (2) Among the IV-based estimators, DeepIV achieves the best performance on the first two groups of synthetic datasets and the second-best performance on the other datasets, and CFIVR obtains the best performance on the last four groups of synthetic datasets and the second-best performance on the first two groups of synthetic datasets. (3) DVAE.CIV obtains the second-best performance on all synthetic datasets. (4) The standard deviation of DVAE.CIV is the smallest on all datasets, and as the sample size increases, the standard deviation of DVAE.CIV reduces significantly. These conclusions demonstrate that DVAE.CIV can learn and disentangle the representations of the CIV and its conditioning set for CACE estimation from data with latent confounders.

In conclusion, DVAE.CIV achieves competitive performance compared to state-of-the-art causal effect estimators while requiring the least prior knowledge in ACE and CACE estimations from observational data with latent confounders.   

\subsection{Experiments on Three Real-World Datasets}
\label{sec:realworlddatasets}
We selected three real-world datasets with their empirical causal effect values available and commonly used in the literature to assess the performance of DVAE.CIV in ACE estimations. We did not conduct experiments on CACE estimation on the three datasets since there were no ground truth or empirical estimates of CACEs available for these datasets. The three real-world datasets are SchoolingReturns~\cite{card1993using}, Cattaneo~\cite{cattaneo2010efficient}, and RHC~\cite{connors1996effectiveness}. These datasets are widely utilized in the evaluation of either IV estimators or data-driven causal effect estimators~\cite{guo2020survey}. Note that SchoolingReturns has a nominated CIV, and the last two datasets do not have a nominated IV for causal effect estimation. Thus, we only compared the DVAE.CIV model with all the aforementioned estimators on SchoolingReturns and the ML-based, tree-based, and VAE-based estimators on both Cattaneo and RHC datasets.

\paragraph{SchoolingReturns.} The dataset is from the national longitudinal survey of youth (NLSY), a well-known dataset of US young employees, aged range from 24 to 34~\cite{card1993using}. The dataset has 3,010 samples and 19 variables~\cite{card1993using}. The  variable of the education of employees is the treatment variable, and the variable of the raw wages in 1976 (in cents per hour)  is the outcome variable. The dataset was collected to study the causal effect of education on earnings. Note that the variable of geographical proximity to a college, \ie \emph{nearcollege} is nominated to be an IV by Card~\cite{card1993using}. The empirical estimate ACE$(W, Y) = 0.1329$ with 95\% confidence interval (0.0484, 0.2175) is from~\cite{verbeek2008guide} and used as the reference value.

\paragraph{Cattaneo.} The dataset has the birth weights of 4,642 singleton births with 20 variables~(\cite{cattaneo2010efficient})  that were collected from Pennsylvania, USA for the study of the average of maternal smoking status during pregnancy ($W$) on a baby's birth weight ($Y$, in grams). The dataset contains several covariates: mother's age, mother's marital status, an indicator for the previous infant where the newborn died, mother's race, mother's education, father's education, number of prenatal care visits, months since last birth, an indicator of firstborn infant and indicator of alcohol consumption during pregnancy. The authors~\cite{cattaneo2010efficient} found a strong negative effect of maternal smoking on the weights of babies, i.e., about $200g$ to $250g$ lighter for a baby with a mother smoking during pregnancy.

\paragraph{Right Heart Catheterization (RHC).} RHC is a real-world dataset obtained from an observational study regarding a diagnostic procedure for the management of critically ill patients~\cite{connors1996effectiveness}. The RHC dataset can be downloaded from the $\mathbf{R}$ package \emph{Hmisc}\textsuperscript{3}\footnote{\textsuperscript{3}~\url{https://CRAN.R-project.org/package=Hmisc}}. The dataset contains 2,707 samples with 72 covariates~\cite{connors1996effectiveness,loh2020confounder}. RHC was for investigating the adult patients who participated in the Study to Understand Prognoses and Preferences for Outcomes and Risks of Treatments (SUPPORT). The treatment variable $W$ is whether a patient received an RHC within 24 hours of admission, and the outcome variable $Y$ is whether a patient died at any time up to 180 days after admission. Note that the empirical conclusion is that applying RHC leads to higher mortality within 180 days than not applying RHC~\cite{connors1996effectiveness}.

\paragraph{Results.} All results on the three real-world datasets are reported in Table~\ref{tab:results}. From Table~\ref{tab:results}, we make the following observations: (1) the estimated causal effects by DVAE.CIV and CF on SchoolingReturns and Cattaneo fall within the empirical intervals, while DML, DRL, and BART provide an opposite estimate to the empirical value on SchoolingReturns; (2) as there is no nominated IV on Cattaneo and RHC, the estimators OrthIV, DMLIV, DeepIV, and CFIVR do not work on both datasets; (3) all estimators, including DVAE.CIV, obtain a consistent estimation on the RHC data, and they reach the same conclusion as the empirical conclusion~\cite{connors1996effectiveness}. These observations further confirm that DVAE.CIV is capable of removing the bias between $W$ and $Y$ in real-world datasets.

In conclusion, our simulation studies show the high performance of DVAE.CIV in ACE and CACE estimations from data with latent confounders, and our experiments on three real-world datasets further confirm the capability of DVAE.CIV in ACE estimation from observational data.

\paragraph{Limitations.} The performance of DVAE.CIV relies on the assumptions made in this work and the assumptions on the VAE model. Note that the identification of the VAE model~\cite{khemakhem2020variational} is an important issue for our proposed DVAE.CIV model. When some of the assumptions or the VAE identification do not hold, DVAE.CIV may obtain an inconsistent conclusion. To obtain a consistent conclusion, it would be better to conduct a sensitivity analysis~\cite{pearl2009causality,imbens2015causal} together with DVAE.CIV to achieve a reliable conclusion in real-world applications.

\section{Conclusion}
\label{sec:cons}
It is a crucial challenge to deal with the bias caused by latent confounders in conditional causal effect estimations from observational data. IV-based methods allow us to remove such confounding bias in an effective way, but it relies on a nominated IV/CIV based on domain knowledge. In this paper, we propose an efficient approach, DVAE.CIV for conditional causal effect estimations from observational data with latent confounders. The DVAE.CIV utilizes the advantages of deep generative models for learning the representations of a CIV and its conditioning set from data with latent confounders. We theoretically show the soundness of the DVAE.CIV model. The effectiveness and potential of the DVAE.CIV are demonstrated by extensive experiments. In simulation studies, DVAE.CIV achieves competitive performance against state-of-the-art estimators that require extra prior knowledge in ACE and CACE estimation from data with latent confounders. The experimental results on three real-world datasets show the superiority of the DVAE.CIV model on ACE estimation over the existing estimators.

\subsubsection{Acknowledgements} This work has been supported by the Australian Research Council (grant number: DP200101210 and DP230101122).

\newpage

\appendix
\noindent{\Large  \textbf{Appendix}}\\
In this Appendix, we provide additional graphical notations and definitions, details of synthetic and real-world datasets, and more experimental results. 

\section{Preliminaries}

\paragraph{Paths.} In a graph $\mathcal{G}$, a path $\pi$ between $V_{1}$ and $V_{p}$ consists of a sequence of distinct nodes $\langle V_{1}, \dots, V_{p}\rangle$ with every pair of successive nodes being adjacent. A node $V$ lies on the path $\pi$ if $V$ belongs to the sequence $\langle V_{1}, \dots, V_{p}\rangle$. A path $\pi$ is a directed or causal path if all edges along it are directed such as $V_{1} \rightarrow\ldots \rightarrow V_{p}$. 

\paragraph{Ancestral relationships.} In a DAG $\mathcal{G}$, $V_i$ is a parent of $V_j$ (and $V_j$ is a child of $V_i$) if $V_i \rightarrow V_j$ appears in this graph.  In a directed path $\pi$, $V_i$ is an ancestor of $V_j$ and $V_j$ is a descendant of $V_i$ if all arrows along $\pi$ point to $V_j$.

\vspace{5pt}
In graphical causal modelling, the assumptions of Markov property, faithfulness and causal sufficiency are often involved to discuss the relationship between the causal graph and the distribution of the data.

\begin{definition}[Markov property~\cite{pearl2009causality}]
	\label{Markov condition}
	Given a DAG $\mathcal{G}=(\mathbf{V}, \mathbf{E})$ and the joint probability distribution of $\mathbf{V}$ $(prob(\mathbf{V}))$, $\mathcal{G}$ satisfies the Markov property if for $\forall V_i \in \mathbf{V}$, $V_i$ is probabilistically independent of all of its non-descendants, given the set of parents $V_i$.
\end{definition}

\begin{definition}[Faithfulness~\cite{spirtes2000causation}]
	\label{Faithfulness}
	A DAG $\mathcal{G}=(\mathbf{V}, \mathbf{E})$ is faithful to a joint distribution $prob(\mathbf{V})$ over the set of variables $\mathbf{V}$ if and only if every independence present in $prob(\mathbf{V})$ is entailed by $\mathcal{G}$ and satisfies the Markov property. A joint distribution $prob(\mathbf{V})$ over the set of variables $\mathbf{V}$ is faithful to the DAG $\mathcal{G}$ if and only if the DAG $\mathcal{G}$ is faithful to the joint distribution $prob(\mathbf{V})$.
\end{definition}

\begin{definition}[Causal sufficiency~\cite{spirtes2000causation}]
	A given dataset satisfies causal sufficiency if in the dataset for every pair of observed variables, all their common causes are observed.
\end{definition}

In a DAG, d-separation is a graphical criterion that enables the identification of conditional independence between variables entailed in the DAG when the Markov property, faithfulness and causal sufficiency are satisfied~\cite{pearl2009causality,spirtes2000causation}.

\begin{definition}[d-separation~\cite{pearl2009causality}]
	\label{d-separation}
	A path $\pi$ in a DAG $\mathcal{G}=(\mathbf{V}, \mathbf{E})$ is said to be d-separated (or blocked) by a set of nodes $\mathbf{Z}$ if and only if
	(i) $\pi$ contains a chain $V_i \rightarrow V_k \rightarrow V_j$ or a fork $V_i \leftarrow V_k \rightarrow V_j$ such that the middle node $V_k$ is in $\mathbf{Z}$, or
	(ii) $\pi$ contains a collider $V_k$ such that $V_k$ is not in $\mathbf{Z}$ and no descendant of $V_k$ is in $\mathbf{Z}$.
	A set $\mathbf{Z}$ is said to d-separate $V_i$ from $V_j$ ($ V_i \indep_d V_j\mid\mathbf{Z}$) if and only if $\mathbf{Z}$ blocks every path between $V_i$ to $V_j$. otherwise they are said to be d-connected by $\mathbf{Z}$, denoted as $V_i\nindep_d V_j\mid\mathbf{Z}$.
\end{definition}

\section{Experiments}

\begin{figure}[t]
	\centering
	\includegraphics[scale=0.485]{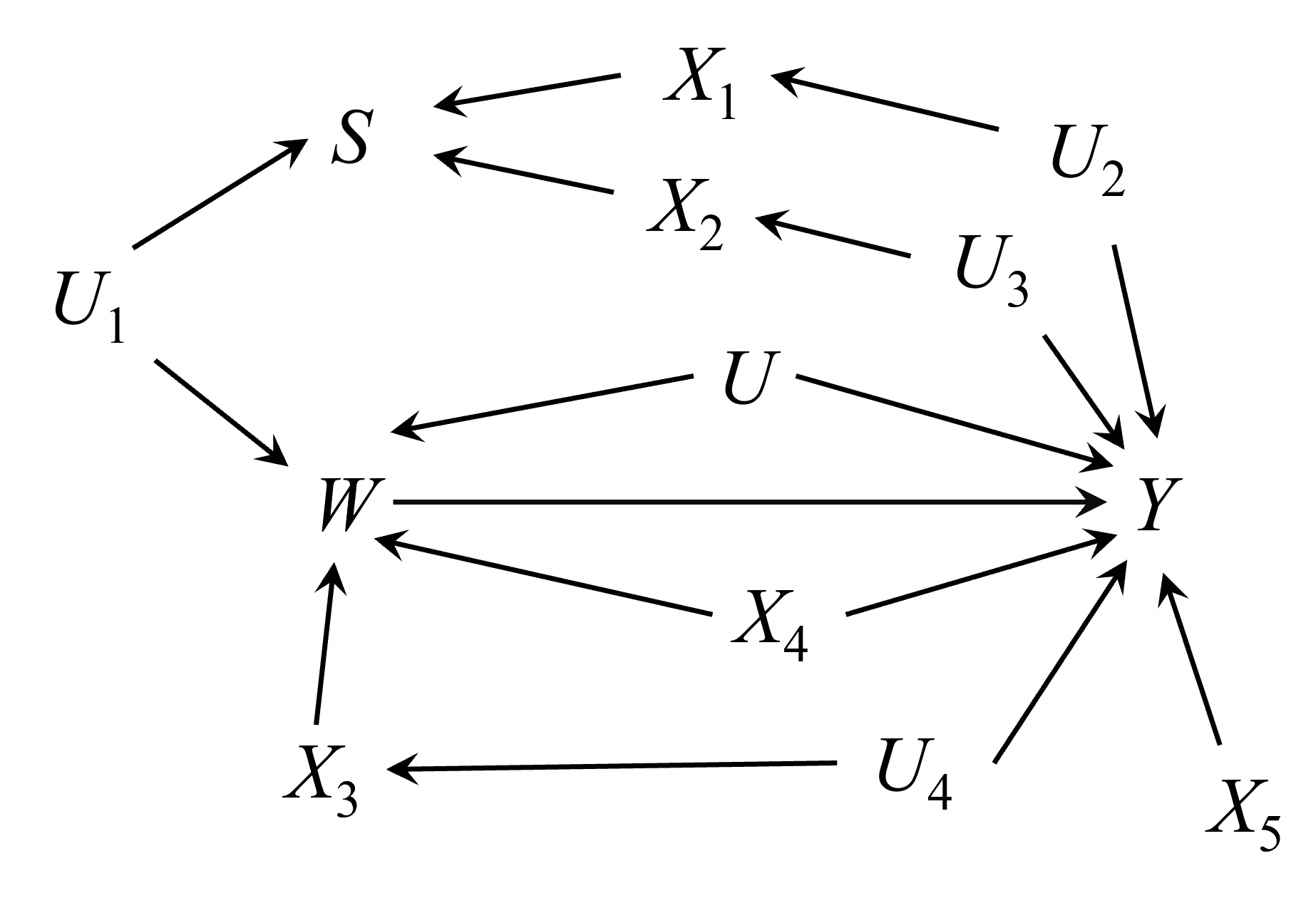}
	\caption{The true causal DAG with a latent confounder $U$ between $W$ and $Y$ is utilised to generate the synthetic datasets in Simulation Study.  $\mathbf{X}= \{S, X_1, X_2, X_3, X_4, X_5\}$ are pretreatment variables, and $\mathbf{U}= \{U, U_1, U_2, U_3, U_4\}$ are five latent confounders. Note that $S$ is a CIV conditioning on $\{X_1, X_2\}$.} 
	\label{fig:figure_001}
\end{figure}

\subsection{Simulation Study}
The simulated datasets are generated from the true DAG in Fig.~\ref{fig:figure_001}, and the specifications of the data generation are as follows: $U, U_1, U_2, U_3, U_4 \sim N(0, 1)$ and $\epsilon_{1, 2, 3, s} \sim N(0, 0.5)$, where $N(,)$ denotes the normal distribution. $X_1 \sim N(0, 1) +0.5* U_2 +\epsilon_{1}$,  $X_2 \sim N(0, 1) +0.5* U_3 +\epsilon_{2}$, $X_3 \sim N(0, 1) +0.5* U_4 +\epsilon_{3}$, $S\sim N(0, 1) + 2*U_1 + 1.5*X_1 + 1.5*X_2 + \epsilon_{s}$,  $X_4 \sim N(1, 1)$,  and $X_5 \sim N(3, 1)$.

The treatment assignment $W$ is generated from $n$ ($n$ denotes the sample size) Bernoulli trials by using the assignment probability $P(W=1\mid U, U_1, X_3) = [1+exp\{2- 1*U - 1*U_1 -1*X_3-1*X_4\}]$. The potential outcome is generated from $Y_{W} = 2 + 2*W + 2*U + 2*U_3 + 2*U_4 + 1*X_4 + 1*X_5 +\epsilon_{W}$ where $\epsilon_{W}\sim N(0, 1)$. Note that true $ACE$ is fixed to 2 on all synthetic datasets.

\begin{table*}[t]
	\centering
	\begin{tabular}{|cl|c|c|c|c|}
		\hline
		\multicolumn{2}{|c|}{Samples}                              & 2k                                  & 6k                                        & 10k                      & 20k                      \\ \hline
		\multicolumn{2}{|c|}{Estimators}                              & $\varepsilon_{ACE}$       & $\varepsilon_{ACE}$      & $\varepsilon_{ACE}$      & $\varepsilon_{ACE}$      \\ \hline
		\multicolumn{1}{|c|}{\multirow{2}{*}{ML-based}}   & DML    & 5.507$\pm$0.387              & 5.624$\pm$0.182       & 5.619$\pm$0.122          & 5.633$\pm$0.096          \\ \cline{2-6} 
		\multicolumn{1}{|c|}{}                            & DRL    & 5.746$\pm$0.404          & 5.833$\pm$0.186   & 5.825$\pm$0.156   & 5.860$\pm$0.106          \\ \hline
		\multicolumn{1}{|c|}{\multirow{3}{*}{tree-based}} & BART   & 3.890$\pm$0.368    & 3.999$\pm$0.156   & 4.014$\pm$0.152 & 4.046$\pm$0.106\\ \cline{2-6} 
		\multicolumn{1}{|c|}{}                            & CF     & 3.218$\pm$0.325& 3.255$\pm$0.140& 3.277$\pm$0.131& 3.306$\pm$0.077  \\ \hline       
		\multicolumn{1}{|c|}{\multirow{2}{*}{VAE-based}}  & CEVAE  & 5.558 $\pm$0.407  & 5.698$\pm$0.194  & 5.640$\pm$0.172         & 5.706$\pm$0.112          \\ \cline{2-6} 
		\multicolumn{1}{|c|}{}                            & TEDVAE & 5.671$\pm$0.399 & 5.583$\pm$0.194   & 5.644$\pm$0.167 & 5.674$\pm$0.100\\ \hline
		\multicolumn{1}{|l|}{\multirow{4}{*}{IV-based}}   & OrthIV & 2.212$\pm$1.260     & 1.952$\pm$0.585        & 1.792$\pm$0.607          & 1.974$\pm$0.419          \\ \cline{2-6} 
		\multicolumn{1}{|l|}{}                            & DMLIV  & 2.170$\pm$1.189     & 1.888$\pm$0.572        & 1.790$\pm$0.626          & 1.971$\pm$0.432      \\\cline{2-6} 
		\multicolumn{1}{|l|}{}           & DeepIV & \textbf{0.352$\pm$0.180} & 0.632$\pm$0.245    & 0.727$\pm$0.315          & 0.757$\pm$0.354          \\ \cline{2-6} 
		\multicolumn{1}{|l|}{}                          & CFIVR  & 1.217$\pm$0.924&  \textbf{0.514$\pm$0.369}  & \textbf{0.552$\pm$0.461} & \textbf{0.416$\pm$0.296} \\ \hline
		\multicolumn{2}{|c|}{DRVAE.CIV}   & \underline{0.612$\pm$0.090}   & \underline{0.588$\pm$0.055}  & \underline{0.536$\pm$0.085}  & \underline{0.512$\pm$0.091}        \\ \hline
	\end{tabular}
	\caption{The within-sample absolute error $\varepsilon_{ACE}$ (mean$\pm$std) over 30 synthetic datasets. The best results are highlighted in boldface and the runner-up results are \underline{underlined}.} 
	\label{tab:syn_ACEerror} 
\end{table*}

\begin{table*}[ht]
	\centering
	\begin{tabular}{|cl|c|c|c|c|}
		\hline
		\multicolumn{2}{|c|}{Samples}                              & 2k                                          & 6k                                     & 10k                      & 20k                      \\ \hline
		\multicolumn{2}{|c|}{Estimators}                              & $\sqrt{\varepsilon_{PEHE}}$       & $\sqrt{\varepsilon_{PEHE}}$    & $\sqrt{\varepsilon_{PEHE}}$      & $\sqrt{\varepsilon_{PEHE}}$    \\ \hline
		\multicolumn{1}{|c|}{\multirow{2}{*}{ML-based}}   & DML    & 5.455$\pm$0.353  & 5.596$\pm$0.174 & 5.587$\pm$0.115& 5.588$\pm$0.090          \\ \cline{2-6} 
		\multicolumn{1}{|c|}{}                            & DRL    & 5.671$\pm$0.370 & 5.786$\pm$0.182  & 5.774$\pm$0.144& 5.794$\pm$0.101\\ \hline
		\multicolumn{1}{|c|}{\multirow{3}{*}{tree-based}} & BART   & 4.185$\pm$344 & 4.227$\pm$0.149 & 4.234$\pm$0.146 & 4.253$\pm$0.106\\ \cline{2-6} 
		\multicolumn{1}{|c|}{}                            & CF     & 3.475$\pm$0.301& 3.504$\pm$0.129& 3.522$\pm$0.121& 3.547$\pm$0.072\\ \hline       
		\multicolumn{1}{|c|}{\multirow{2}{*}{VAE-based}}  & CEVAE  & 6.061$\pm$0.352   & 6.149$\pm$0.178                & 6.115$\pm$0.149         & 6.173$\pm$0.101          \\ \cline{2-6} 
		\multicolumn{1}{|c|}{}                            & TEDVAE & 6.076$\pm$0.337 & 6.149$\pm$0.175      & 6.119$\pm$0.148         & 6.147$\pm$0.091\\ \hline
		\multicolumn{1}{|l|}{\multirow{4}{*}{IV-based}}   & OrthIV & 3.050$\pm$0.700    & 2.804$\pm$0.303   & 2.736$\pm$0.255  & 2.784$\pm$0.213 \\ \cline{2-6} 
		\multicolumn{1}{|l|}{}                            & DMLIV  & 3.009$\pm$0.664  & 2.772$\pm$0.280 & 2.738$\pm$0.268 & 2.784$\pm$0.216      \\ \cline{2-6} 
		\multicolumn{1}{|l|}{}           & DeepIV & \textbf{2.403$\pm$0.036} &\textbf{2.408$\pm$0.038}  & \underline{2.418$\pm$0.062} & \underline{2.425$\pm$0.065} \\ \cline{2-6} 
		\multicolumn{1}{|l|}{}                          & CFIVR  & 3.048$\pm$0.649 &2.457$\pm$0.252 & \textbf{2.432$\pm$0.355} & \textbf{2.328$\pm$0.144} \\ \hline
		\multicolumn{2}{|c|}{DRVAE.CIV}   & \underline{2.460$\pm$0.041}         & \underline{2.454$\pm$0.029}   & {2.449$\pm$0.027}        & {2.448$\pm$0.015}        \\ \hline
	\end{tabular}
	\caption{The within-sample  $\sqrt{\varepsilon_{PEHE}}$ (mean$\pm$std) over 30 synthetic datasets. The lowest $\sqrt{\varepsilon_{PEHE}}$ are highlighted in boldface and the runner-up results are \underline{underlined}.  } 
	\label{tab:syn_PEHE}
\end{table*}

The experimental results of $\varepsilon_{ACE}$ and $\sqrt{\varepsilon_{PEHE}}$ on within-samples are reported in Tables~\ref{tab:syn_ACEerror} and \ref{tab:syn_PEHE}, respectively.

\paragraph{Results.}  Tables~\ref{tab:syn_ACEerror} and \ref{tab:syn_PEHE} support the same conclusion drawn in the main text.

\subsection{Experiments on Three Real-World Datasets}
\paragraph{SchoolingReturns.} The dataset is from the national longitudinal survey of youth (NLSY), a well-known dataset of US young employees, aged range from 24 to 34~\cite{card1993using}. The treatment is the education of employees, and the outcome is raw wages in 1976 (in cents per hour). The data contains 3,010 individuals and 19 covariates. The covariates include experience (Years of labour market experience), ethnicity, resident information of an individual, age, nearcollege (whether an individual grew up near a 4-year college?), marital status, Father's educational attainment, Mother's educational attainment, and so on. A goal of the studies on this dataset is to investigate the causal effect of education on earnings. Card~\cite{card1993using} used geographical proximity to a college, \ie the covariate \emph{nearcollege} as an instrument variable. We take $ACE = 0.1329$ with 95\% conditional interval (0.0484, 0.2175) from~\cite{verbeek2008guide} as the reference causal effect.

\paragraph{Cattaneo.} The Cattaneo~(\cite{cattaneo2010efficient}) is usually used to study the ACE of maternal smoking status during pregnancy ($W$) on a baby's birth weight (in grams)\footnote{\url{http://www.stata-press.com/data/r13/cattaneo2.dta}}.
Cattaneo2 consists of the birth weights of 4,642 singleton births in Pennsylvania, USA~(\cite{almond2005costs,cattaneo2010efficient}). Cattaneo contains 864 smoking mothers ($W$=1) and 3,778 nonsmoking mothers ($W$=0). The dataset contains several covariates: mother's age, mother's marital status, an indicator for the previous infant where the newborn died, mother's race, mother's education, father's education, number of prenatal care visits, months since last birth, an indicator of firstborn infant and indicator of alcohol consumption during pregnancy. The authors~(\cite{almond2005costs}) found a strong negative effect of maternal smoking on the weights of babies, that is, about $200g$ to $250g$ lower for a baby with a mother smoking during pregnancy than for a baby without by statistical analysis on all covariates.

\paragraph{Right heart catheterization (RHC).} Right heart catheterization (RHC) is a real-world dataset obtained from an observational study regarding a diagnostic procedure for the management of critically ill patients~(\cite{connors1996effectiveness}). The RHC dataset can be downloaded from the $\mathbf{R}$ package \emph{Hmisc}\footnote{\url{https://CRAN.R-project.org/package=Hmisc}}. RHC contains information on hospitalised adult patients from five medical centres in the USA. These hospitalised adult patients participated in the Study to Understand Prognoses and Preferences for Outcomes and Risks of Treatments (SUPPORT). Treatment $W$ indicates whether a patient received an RHC within 24 hours of admission. The outcome $Y$ is whether a patient died at any time up to 180 days after admission. The original RHC dataset has 5,735 samples with 73 covariates. We preprocess the original data, as suggested by Loh et al.~(\cite{loh2020confounder}), and the final dataset contains 2,707 samples with 72 covariates.  Note that the empirical conclusion is that applying RHC leads to higher mortality within 180 days than not applying RHC~\cite{connors1996effectiveness}.


%
%
%
\bibliographystyle{splncs04}
\bibliography{sub_73_main}
\end{document}